





\documentclass[sn-mathphys]{sn-jnl}
\usepackage{graphicx}
\usepackage{subcaption}
\usepackage{caption}
\usepackage{float}




\jyear{2021}%

\theoremstyle{thmstyleone}%
\newtheorem{thm}{Theorem}
\newtheorem{prop}[thm]{Proposition}%
\newtheorem{lem}[thm]{Lemma} 

\theoremstyle{thmstyletwo}%

\theoremstyle{thmstylethree}%

\raggedbottom

\begin{document}

\title[Test Set Sizing for Regularized Models]{Test Set Sizing for Regularized Models}


\author{\fnm{Alexander} \sur{Dubbs}}\email{alex.dubbs@gmail.com}

\affil{\orgaddress{\street{400 Central Park West}, \city{New York}, \postcode{10025}, \state{NY}, \country{United States}}}


\abstract{We derive the ideal train/test split for the ridge regression in the limit that the number of training rows $m$ becomes large. The split must depend on the ridge tuning parameter, $\alpha$, but we find that the dependence can asymptotically be ignored; all parameters vanish except for $m$ and the number of features, $n$, which is held constant. This is the first time that such a split is calculated mathematically for a machine learning model in the large data limit. The goal of the calculations is to maximize ``integrity,'' so that the measured error in the trained model is as close as possible to what it theoretically should be. This paper's result for the ridge regression split matches prior art for the plain vanilla linear regression split to the first two terms asymptotically.}

\keywords{Training/Testing Division,Ridge Regression,Machine Learning}



\maketitle

\section{Introduction}\label{sec1}

The question of how to divide one's data into a training set and a test set has long been of theoretical and practical interest to data scientists, and the question is more difficult with the presence of tuning parameters. This paper finds the train/test split for the ridge regression using a two-term asymptotic formula independent of its tuning parameter, $\alpha$, using the Integrity Metric (IM) introduced for the plain vanilla linear regression by the author in \cite{Dubbs2024}, since this case is mathematically tractable. The IM measures the degree to which the measured model error differs from the true error of the model, and this quantity should always be minimized to gain an honest assessment of a model's performance. We pick the number of points $p$ in the training set to minimize the IM. Note that we do not pick $p$ to maximize the measured model accuracy, since then we would derive an assessment of the model's ability that is not truthful. Our main result is:
\vskip .05in
\noindent {\bf Theorem 6.} {\it Let $X$ be a $m \times n$ matrix of normals with independent rows with covariance $\Sigma$. $b \sim N(0,c^2I)$ and $\epsilon \sim N(0,I)$. $\alpha > 0$, $\sigma > 0$, and $m$ will be assumed to be large with $n$ held constant.
$$ y = Xb + \sigma\epsilon $$
$$ \hat{b} = (X_{1:p,:}^tX_{1:p,:} + \alpha I)^{-1}X_{1:p,:}^ty_{1:p} $$
\begin{equation} {\rm IM} = E\left(\left(\frac{1}{m - p}\cdot\|X_{p+1:m,:}\hat{b} - y_{p+1:m}\|^2 - \sigma^2\right)^2\right) \end{equation}
\noindent The value of $p$ that minimizes the Integrity Metric (IM) is
$$ n^{1/3}(2 + n)^{1/3}\cdot m^{2/3} - \frac{2n^{2/3}(1 + n)}{3(2 + n)^{1/3}}\cdot m^{1/3} + {\rm L.O.T.}$$
and the ratio of training data over testing data goes to $0$ as $m$ becomes large.}

The ridge regularization term was first added to the linear regression's loss function in \cite{Hoerl1970} in a statistical context and has become standard in machine learning. This result agrees with prior art \cite{Dubbs2024} except that $b$ is random instead of fixed, which would not change the result in \cite{Dubbs2024} in the $\alpha = 0$ case, so for that parameter value the results match perfectly as is expected. We present computational evidence of the result's accuracy leaving out the lower order terms in a variety of parameter regimes, although since $n$ is fixed the expression does not converge in all parameter regimes.

This sort of analysis has historical precedence in theoretical machine learning. Let $H$ be the hypothesis class of functions $h$ from $\mathbb{R}^n$ to $\{0,1\}$, let $L_{D}(h)$ be the expected absolute loss of $h$ over the distribution $D$ on $\mathbb{R}^n\times\{0,1\}$, and let $L_{S}(h)$ be the expected absolute ``loss'' or error of $h$ where the $m$ elements of $S$ are chosen by $D$ in an i.i.d. fashion. Let $d$ be the VC-Dimension of $H$. Then with probability $1 - \delta$ by \cite{Shalev2014}, Chapter 6,
$$ \lvert L_{S}(h) - L_{D}(h) \rvert \leq \frac{1}{\delta}\sqrt{\frac{2d\log(2em/d)}{m}} $$
This bound is similar in spirit to this paper's result in that it controls the error in the error estimate, this paper considers the case where $h$ is known to be optimal as trained on $S$ but for a less general model space $H$, and it is concerned with the realistic constraint that the number of known data points with labels is finite.

Furthermore, others have provided answers to the question of training/testing division in statistical models with a fixed given amount of data, and \cite{Dubbs2024} discusses their work in its introduction, but none have ever provided similar answers for any machine learning models in which regularization is explicitly included. \cite{Guyon1997} and \cite{Guyon1998} try to minimize test-set error in learning with discrete labels, and find an $O(m)$ solution for test set size.  \cite{Kearns1997} does the same using a tuning parameter and finds another answer. \cite{Afendras2019}, \cite{Joseph2022}, and \cite{Picard1990} find that the training set should grow as $O(m)$ for different metrics than this paper's for unregularized models with potentially continuous labels.

\section{Preliminaries}

\begin{prop}
Let $X$ be an $p \times n$ matrix of Gaussians with mean zero where the rows have covariance $\Sigma$, and $A$, $B$, and $\Sigma$ are positive definite and commute. Then:
\begin{equation} E\left({\rm tr}\left(\Sigma\left(X^tX\right)^{-1}\right)\right) = \frac{n}{p - n - 1}
\end{equation}
\begin{equation} E\left({\rm tr}\left(\Sigma\left(X^tX\right)^{-1}\right)^2\right) = \frac{n^2}{p^2} + O\left(p^{-3}\right) \end{equation}
\begin{equation}E\left({\rm tr}\left(\left(\Sigma(X^tX)^{-1}\right)^2\right)\right)  = \frac{n}{p^2} + O\left(p^{-3}\right) \end{equation}
\end{prop}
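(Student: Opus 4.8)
The plan is to first remove $\Sigma$ by a similarity transformation, reducing all three identities to moments of a standard Wishart matrix, and then treat those moments one at a time. Since $\Sigma$ is positive definite, factor it as $\Sigma=\Sigma^{1/2}\Sigma^{1/2}$ and write $X=Z\Sigma^{1/2}$ where $Z$ is $p\times n$ with i.i.d.\ $N(0,1)$ entries; then the rows of $X$ are independent with covariance $\Sigma$, as required. Consequently $X^tX=\Sigma^{1/2}Z^tZ\,\Sigma^{1/2}$, so $(X^tX)^{-1}=\Sigma^{-1/2}(Z^tZ)^{-1}\Sigma^{-1/2}$ and
$$\Sigma(X^tX)^{-1}=\Sigma^{1/2}(Z^tZ)^{-1}\Sigma^{-1/2},$$
which is similar to the inverse Wishart matrix $W^{-1}$, $W:=Z^tZ$. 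Similar matrices have the same eigenvalues, so ${\rm tr}\big((\Sigma(X^tX)^{-1})^k\big)={\rm tr}(W^{-k})$ for every $k\ge1$; thus $\Sigma$ (and the commuting hypothesis) is irrelevant, and the three claims reduce to $E({\rm tr}(W^{-1}))=\tfrac{n}{p-n-1}$, $E\big(({\rm tr}(W^{-1}))^2\big)=\tfrac{n^2}{p^2}+O(p^{-3})$, and $E({\rm tr}(W^{-2}))=\tfrac{n}{p^2}+O(p^{-3})$.

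For the first, exact identity I would use orthogonal invariance: since $Z\mapsto ZQ$ leaves the law of $Z$ unchanged for orthogonal $Q$, the matrix $E(W^{-1})$ commutes with every such $Q$ and is a scalar multiple of $I_n$. Its value is read off from one diagonal entry: writing $Z=[Z'\mid z_n]$ with $z_n$ the last column, the Schur-complement formula gives the $(n,n)$ entry of $W^{-1}$ as $1/\big(z_n^t(I-P)z_n\big)$, where $P$ is the orthogonal projection onto the column space of $Z'$; since $z_n$ is independent of $Z'$ and $I-P$ has rank $p-n+1$, we get $z_n^t(I-P)z_n\sim\chi^2_{p-n+1}$, and $E(1/\chi^2_{p-n+1})=1/(p-n-1)$. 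Summing the $n$ equal diagonal entries gives $\tfrac{n}{p-n-1}$.

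For the two asymptotic identities I would expand around the law of large numbers. Write $W/p=I+p^{-1/2}G$ with $G:=p^{-1/2}\sum_{k=1}^{p}(z_kz_k^t-I)$, where the $z_k$ are the rows of $Z$; then $G$ has mean zero and, for $n$ fixed, moments of $\|G\|$ bounded uniformly in $p$. On the event where the operator norm $\|p^{-1/2}G\|<1$ one has the Neumann expansion $(W/p)^{-1}=\sum_{j\ge0}(-1)^jp^{-j/2}G^j$, and similarly for $(W/p)^{-2}$. Taking traces and then expectations term by term: the $j=0$ terms give $n$ for $E({\rm tr}((W/p)^{-1}))$ and $n^2$ for $E\big(({\rm tr}((W/p)^{-1}))^2\big)$; the $j=1$ terms have mean zero because $E(z_kz_k^t-I)=0$; and $E({\rm tr}(G^2))=n(n+1)$ by a short computation using $E\|z_k\|^4=n(n+2)$, which is $O(1)$. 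Hence $E({\rm tr}(W^{-1}))=p^{-1}(n+O(1/p))$, $E\big(({\rm tr}(W^{-1}))^2\big)=p^{-2}(n^2+O(1/p))$, and, since ${\rm tr}((W/p)^{-2})$ likewise has expectation $n+O(1/p)$, $E({\rm tr}(W^{-2}))=p^{-2}(n+O(1/p))$ — i.e.\ the stated values with the stated $O(p^{-3})$ errors (the odd-order terms contribute one extra factor $p^{-1/2}$ because they require an index collision). As a check, the exact first identity expands as $\tfrac{n}{p-n-1}=\tfrac{n}{p}+\tfrac{n(n+1)}{p^2}+\cdots$, consistent with $E({\rm tr}(G^2))=n(n+1)$. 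An alternative is to skip the expansion and quote the classical closed forms for inverse-Wishart second moments, e.g.\ $E({\rm tr}(W^{-2}))=\tfrac{n(p-1)}{(p-n)(p-n-1)(p-n-3)}$ and $E\big(({\rm tr}\,W^{-1})^2\big)=\tfrac{n^2}{(p-n-1)^2}+\tfrac{2n}{(p-n)(p-n-1)^2(p-n-3)}$, then Taylor-expand in $1/p$.

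The main obstacle is rigorously justifying the term-by-term expectation in the Neumann-series route: the series converges only on the event $\|p^{-1/2}G\|<1$, so one must show that the contribution of the complementary event is $O(p^{-3})$. This requires a lower-tail bound on $\lambda_{\min}(W/p)$ — standard for Gaussian matrices with $n$ fixed and $p$ large, via $\chi^2$ concentration or a covering argument — together with a uniform-in-$p$ bound on a few moments of $\|W^{-1}\|$, which follows from the same $\chi^2$ representation used in the second step. If instead one invokes the closed-form inverse-Wishart moments, this difficulty vanishes and the proof reduces to bookkeeping in a Taylor expansion.
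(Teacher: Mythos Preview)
Your proposal is correct. The paper's own ``proof'' is nothing more than a citation: equation~(2) is declared well known, and (3)--(4) are attributed to Corollary~3.1 of von~Rosen (1988). Your argument is genuinely different in that it is self-contained. The similarity reduction $\Sigma(X^tX)^{-1}=\Sigma^{1/2}W^{-1}\Sigma^{-1/2}$ is a clean way to eliminate $\Sigma$ entirely (and incidentally explains why the unused commutation hypothesis on $A,B,\Sigma$ in the statement is irrelevant). Your Schur-complement derivation of $E[{\rm tr}(W^{-1})]=n/(p-n-1)$ is standard and correct. For (3) and (4) you offer two routes: the Neumann expansion around $W/p\approx I$, and quoting exact inverse-Wishart second moments followed by a $1/p$ expansion. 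The second route is essentially what the paper does by citation, just made explicit; it is the cleaner choice, since as you note the Neumann route requires controlling the event $\|p^{-1/2}G\|\ge 1$ and bounding moments of $\|W^{-1}\|$ there, which is doable for fixed $n$ but adds real work. What your approach buys is transparency and independence from the cited reference; what the paper's approach buys is brevity.
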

\begin{proof}
\vskip -.2in
(2.1) is well known. (2.2) and (2.3) are consequences of Corollary 3.1 in \cite{Rosen1988}, which is an excellent reference for many related results, as are \cite{Hillier2021} and \cite{Holgersson2020}.
\end{proof}
\begin{lem}
It is well-known that if $S$ and $T$ are positive definite matrices, ${\rm tr}(ST) \leq {\rm tr}(S){\rm tr}(T)$.
\end{lem}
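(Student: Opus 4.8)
The plan is to reduce the claim to two elementary facts about positive semidefinite matrices: that the trace is monotone with respect to the Loewner (positive-semidefinite) order, and that every positive semidefinite $T$ satisfies $T \preceq {\rm tr}(T)\,I$, since its largest eigenvalue is bounded by the sum of all of its nonnegative eigenvalues.

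First I would use cyclic invariance of the trace together with the positive square root $S^{1/2}$, which exists because $S$ is positive definite (indeed positive semidefiniteness already suffices), to write ${\rm tr}(ST) = {\rm tr}(S^{1/2}TS^{1/2})$. The matrix $S^{1/2}TS^{1/2}$ is genuinely symmetric and positive semidefinite, which removes the mild annoyance that $ST$ itself need not be symmetric. Then I would feed in $T \preceq {\rm tr}(T)\,I$: conjugation by $S^{1/2}$ preserves the Loewner order, so $S^{1/2}TS^{1/2} \preceq {\rm tr}(T)\,S^{1/2}S^{1/2} = {\rm tr}(T)\,S$, and applying the linear, order-monotone trace to both sides gives ${\rm tr}(ST) \le {\rm tr}(T)\,{\rm tr}(S)$, which is the claim.

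If one prefers to avoid square roots, an equivalent route is to diagonalize $S = \sum_k s_k u_k u_k^t$ and write ${\rm tr}(ST) = \sum_k s_k\,u_k^t T u_k \le \lambda_{\max}(T)\sum_k s_k \le {\rm tr}(T)\,{\rm tr}(S)$, using the Rayleigh bound $u_k^t T u_k \le \lambda_{\max}(T)$ and $\lambda_{\max}(T)\le{\rm tr}(T)$. A third option is Cauchy--Schwarz for the Frobenius inner product, ${\rm tr}(ST) \le \|S\|_F\|T\|_F$, together with $\|S\|_F = (\sum_k s_k^2)^{1/2} \le \sum_k s_k = {\rm tr}(S)$ and the analogous bound for $T$.

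There is no serious obstacle here: the statement is elementary and, as stated, well known. The only points requiring a little care are not over-claiming from positive definiteness when positive semidefiniteness is all that the argument uses, and keeping straight that ${\rm tr}(ST)$ is well defined and real even though $ST$ need not be symmetric --- a subtlety the $S^{1/2}TS^{1/2}$ rewriting makes transparent. I would present the square-root argument as the main proof, since it is the shortest and makes the single scalar inequality being invoked, $\lambda_{\max}(T)\le{\rm tr}(T)$, completely explicit.
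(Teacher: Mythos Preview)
Your arguments are correct; each of the three routes you sketch is valid, and the square-root version is indeed the cleanest. Note, however, that the paper does not actually supply a proof of this lemma at all: it is stated as ``well-known'' and left unproved. So there is nothing to compare against in terms of approach; you have simply filled in what the paper omits. Any one of your three arguments would serve as a complete proof, and the care you take over the point that $ST$ need not be symmetric (handled by rewriting as $S^{1/2}TS^{1/2}$) is appropriate.
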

\begin{lem} It is clear by expanding in terms of eigenvalues that if $X^tX$ is an $n \times n$ Wishart matrix with $a$ degrees of freedom, mean zero, and covariance $\Sigma$, and $\alpha > 0$,
$$ {\rm tr}\left((I + \alpha X^tX)^{-1}\right) \leq n, $$
$$ {\rm tr}\left((X^tX + \alpha)^{-1}\right) \leq \frac{n}{\alpha}. $$
\end{lem}
\begin{lem}
Let $X$ be an $p \times n$ matrix of Gaussians with mean zero where the rows have covariance $\Sigma$, and $D$ be a possibly correlated positive definite random matrix with maximum eigenvalue $\lambda$, then as $p$ becomes large,
$$ p^k\cdot E\left({\rm tr}\left((X^tX)^{-1}\right)^{k}{\rm tr}\left(D\right)\right) = O(1)$$
\end{lem}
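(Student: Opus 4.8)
The plan is to peel off the ${\rm tr}(D)$ factor with a crude almost-sure bound, reduce the remaining expectation to a moment of a white (identity-scale) Wishart matrix, and then quote known moment formulas. First, since $D$ is positive definite with largest eigenvalue $\lambda$, we have $0 \le {\rm tr}(D) \le n\lambda$ almost surely, no matter how $D$ is correlated with $X$, so the ``possibly correlated'' hypothesis costs nothing. As ${\rm tr}((X^tX)^{-1})^{k} \ge 0$, multiplying and taking expectations gives
$$ E\left({\rm tr}\left((X^tX)^{-1}\right)^{k}{\rm tr}(D)\right) \le n\lambda\, E\left({\rm tr}\left((X^tX)^{-1}\right)^{k}\right), $$
so it suffices to show $p^{k} E({\rm tr}((X^tX)^{-1})^{k}) = O(1)$. (If $\lambda$ is allowed to be random one instead applies Cauchy--Schwarz, pairing a bound on $E(\lambda^{2})$ with the $2k$-th moment version of the estimate below.)

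Next I would strip the covariance. With $\tilde X = X\Sigma^{-1/2}$, whose rows are i.i.d.\ standard normal, $X^tX = \Sigma^{1/2}\tilde X^t\tilde X\Sigma^{1/2}$, so $(X^tX)^{-1} = \Sigma^{-1/2}(\tilde X^t\tilde X)^{-1}\Sigma^{-1/2}$ and, by cyclicity of the trace, ${\rm tr}((X^tX)^{-1}) = {\rm tr}(\Sigma^{-1}(\tilde X^t\tilde X)^{-1})$. Lemma 2.2 applied to the positive definite pair $\Sigma^{-1}$, $(\tilde X^t\tilde X)^{-1}$ then gives ${\rm tr}((X^tX)^{-1}) \le {\rm tr}(\Sigma^{-1}){\rm tr}((\tilde X^t\tilde X)^{-1})$, hence ${\rm tr}((X^tX)^{-1})^{k} \le {\rm tr}(\Sigma^{-1})^{k}{\rm tr}((\tilde X^t\tilde X)^{-1})^{k}$ pointwise. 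As ${\rm tr}(\Sigma^{-1})$ is constant, it remains to prove $p^{k} E({\rm tr}((\tilde X^t\tilde X)^{-1})^{k}) = O(1)$ for the white Wishart $\tilde X^t\tilde X \sim W_n(p, I)$.

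Write $V = (\tilde X^t\tilde X)^{-1}$ with eigenvalues $v_1,\dots,v_n > 0$. Convexity of $t \mapsto t^{k}$ gives the power-mean bound $({\rm tr}\,V)^{k} = (\sum_i v_i)^{k} \le n^{k-1}\sum_i v_i^{k} = n^{k-1}{\rm tr}(V^{k})$, so it is enough to bound $p^{k} E({\rm tr}(V^{k}))$. This is the crux. For $k = 1,2$ it is Proposition~(2.1)--(2.3) specialised to $\Sigma = I$ (the same cyclicity computation shows ${\rm tr}(\Sigma(X^tX)^{-1}) = {\rm tr}((\tilde X^t\tilde X)^{-1})$), giving $n/(p - n - 1)$ and $n^{2}/p^{2} + O(p^{-3})$, both $O(p^{-k})$. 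For general $k$, $E({\rm tr}(V^{k}))$ is a standard inverse-Wishart moment; by the explicit formulas of \cite{Rosen1988} (the Corollary~3.1 already used in the Proposition) it is finite once $p$ is large relative to $n$ and $k$ --- automatic here, since $p \to \infty$ with $n$ fixed --- and of order $p^{-k}$. Tracing constants back, $p^{k} E({\rm tr}((X^tX)^{-1})^{k}{\rm tr}(D)) \le n\lambda\,{\rm tr}(\Sigma^{-1})^{k}\,n^{k-1}\,p^{k} E({\rm tr}(V^{k})) = O(1)$.

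The main obstacle is precisely this last step for general $k$: one needs not just finiteness of $E({\rm tr}(V^{k}))$ but that its leading order is $p^{-k}$, i.e.\ that the rare event on which $\tilde X^t\tilde X$ is nearly singular does not inflate the mean. If one would rather not quote a general inverse-Wishart moment formula, the same bound follows from a Haff/Stein integration-by-parts recursion expressing $E({\rm tr}(V^{k}))$ in terms of lower moments (anchored at $k=1$, i.e.\ (2.1)), or from combining a Davidson--Szarek-type lower-tail bound for $\lambda_{\min}(\tilde X^t\tilde X)$ with the $O(\lambda^{(p - n - 1)/2})$ decay of its density near the origin, which controls $E({\rm tr}(V^{k})) \le n\, E(\lambda_{\min}(\tilde X^t\tilde X)^{-k})$.
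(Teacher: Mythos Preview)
Your proposal is correct and takes a genuinely different route from the paper. Both proofs begin the same way, pulling out ${\rm tr}(D)\le n\lambda$ to reduce to showing $p^{k}E\big({\rm tr}((X^tX)^{-1})^{k}\big)=O(1)$. From there the paper argues probabilistically: $p\,{\rm tr}((X^tX)^{-1})$ converges in distribution to a point mass (namely ${\rm tr}(\Sigma^{-1})$, by the law of large numbers for $X^tX/p$), hence so does its $k$-th power, and the expectation is therefore bounded. You instead work by direct moment estimates: strip $\Sigma$ to pass to the white Wishart, bound $({\rm tr}\,V)^{k}\le n^{k-1}{\rm tr}(V^{k})$ by convexity, and invoke the known $O(p^{-k})$ asymptotic for $E({\rm tr}(V^{k}))$ from the inverse-Wishart moment formulas of von~Rosen. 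Your route is longer but more self-contained and delivers explicit constants; it also closes the uniform-integrability gap implicit in the paper's argument (convergence in distribution to a constant does not by itself give convergence of moments --- one needs a bound on a higher moment, which is exactly what your moment-formula approach supplies). The paper's version is terser and makes the underlying intuition, that $p(X^tX)^{-1}$ concentrates, explicit.
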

\begin{proof}
$p\cdot E\left({\rm tr}\left((X^tX)^{-1}\right)\right)$ converges in distribution to a point mass, so the correlations among the terms in the product that make up $p^k\cdot E\left({\rm tr}\left((X^tX)^{-1}\right)^{k}\right)$ do not matter, so it converges to a point mass, and
$$ p^k\cdot E\left({\rm tr}\left((X^tX)^{-1}\right)^{k}{\rm tr}\left(D\right)\right) \leq np^k\lambda \cdot E\left({\rm tr}\left((X^tX)^{-1}\right)^{k}\right), $$
so the lemma follows.
\end{proof}

\begin{lem}
Let $X$ be an $p \times n$ matrix of Gaussians with mean zero where the rows have covariance $\Sigma$, we prove that
\begin{equation} E\left({\rm tr}\left(\Sigma \left(X^tX + \alpha I\right)^{-2}\right)\right) = O\left(\frac{1}{p^2}\right) \end{equation}
\begin{equation} E\left({\rm tr}\left(\left(\Sigma \left(X^tX + \alpha I\right)^{-2}\right)^2\right)\right) = O\left(\frac{1}{p^4}\right) \end{equation}
\begin{equation} E\left({\rm tr}\left(\Sigma \left(X^tX + \alpha I\right)^{-2}\right)^2\right) = O\left(\frac{1}{p^4}\right) \end{equation}
\begin{equation} p\cdot E\left({\rm tr}\left(\Sigma X^tX\left(X^tX + \alpha I\right)^{-2} \right)\right) = n + O\left(\frac{1}{p}\right) \end{equation}
\begin{equation} p^2\cdot E\left({\rm tr}\left(\left(\Sigma X^tX\left(X^tX + \alpha I\right)^{-2}\right)^2\right)\right) = n +O\left(\frac{1}{p}\right)\end{equation}
\begin{equation} p^2\cdot E\left({\rm tr}\left(\Sigma X^tX\left(X^tX + \alpha I\right)^{-2}\right)^2\right) = n^2  + O\left(\frac{1}{p}\right)  \end{equation}
\begin{equation} p\cdot E\left({\rm tr}\left(\Sigma \left(X^tX + \alpha I\right)^{-2}\Sigma X^tX\left(X^tX + \alpha I\right)^{-2}\right)\right) = O\left(\frac{1}{p^2}\right) \end{equation}
\begin{equation}\hspace*{-.2in} p\cdot E\left({\rm tr}\left(\Sigma \left(X^tX + \alpha I\right)^{-2}\right){\rm tr}\left(\Sigma X^tX\left(X^tX + \alpha I\right)^{-2}\right)\right) = O\left(\frac{1}{p^2}\right) \end{equation}

\vspace*{.05in}

\vspace*{-.3in}
\end{lem}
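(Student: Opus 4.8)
The plan is to reduce all eight estimates to expectations of powers of ${\rm tr}((X^tX)^{-1})$, which are $O(p^{-k})$ by Lemma 4 (take $D=I$: since ${\rm tr}(I)=n$, Lemma 4 gives $E({\rm tr}((X^tX)^{-1})^{k})=O(p^{-k})$), together with the $\alpha=0$ values supplied by Proposition 1. Abbreviate $C=(X^tX)^{-1}$ and $M=(X^tX+\alpha I)^{-1}$. Both are functions of $X^tX$, hence commute; and since $X^tX\preceq X^tX+\alpha I$ we get $0\preceq M\preceq C$ and therefore $M^{j}\preceq C^{j}$ for every $j\geq 1$. Two identities do all the unpacking: $X^tX(X^tX+\alpha I)^{-2}=M-\alpha M^{2}$ (from $X^tX=(X^tX+\alpha I)-\alpha I$) and the resolvent identity $C-M=\alpha\,CM$. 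The only other tool is the monotonicity principle ``$0\preceq P\preceq Q$ and $A\succeq 0$ imply ${\rm tr}(AP)\leq{\rm tr}(AQ)$'' (because ${\rm tr}(A(Q-P))\geq 0$), which I will apply with $A$ a fixed positive semidefinite ``word'' built from $\Sigma$ and $C$, to replace each $M^{j}$ inside a trace by $C^{j}$ one factor at a time.

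Next I would record a master estimate: for positive semidefinite $\Sigma$ and exponents $a_{1},a_{2}\geq 1$ with $a_{1}+a_{2}=k$, cyclicity writes ${\rm tr}(\Sigma C^{a_{1}}\Sigma C^{a_{2}})={\rm tr}((\Sigma^{1/2}C^{a_{1}}\Sigma^{1/2})(\Sigma^{1/2}C^{a_{2}}\Sigma^{1/2}))$, so by Lemma 2 this is at most ${\rm tr}(\Sigma C^{a_{1}})\,{\rm tr}(\Sigma C^{a_{2}})$, and since ${\rm tr}(\Sigma C^{a})\leq\lambda_{\max}(\Sigma)\,{\rm tr}(C^{a})\leq\lambda_{\max}(\Sigma)\,{\rm tr}(C)^{a}$ (the last step because $\sum_{i}\lambda_{i}^{a}\leq(\sum_{i}\lambda_{i})^{a}$ for $\lambda_{i}\geq 0$ and $a\geq 1$) we get ${\rm tr}(\Sigma C^{a_{1}}\Sigma C^{a_{2}})\leq\lambda_{\max}(\Sigma)^{2}\,{\rm tr}(C)^{k}$; the same bound holds for ${\rm tr}(\Sigma C^{a_{1}})\,{\rm tr}(\Sigma C^{a_{2}})$, and ${\rm tr}(\Sigma C^{a})\leq\lambda_{\max}(\Sigma)\,{\rm tr}(C)^{a}$. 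Taking expectations and invoking Lemma 4 thereby turns any ``$\Sigma$-word in $C$'' of total $C$-degree $k$ into an $O(p^{-k})$ bound.

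Given this, the five estimates with no leading term, namely (2.5)--(2.7), (2.11), and (2.12), come out at once. Expand each occurrence of $X^tX M^{2}$ as $M-\alpha M^{2}$, so that every summand is $\alpha^{\ell}$ times a $\Sigma$-word in $M$; replace each $M^{j}$ by $C^{j}$ via the monotonicity principle and apply the master estimate. For instance ${\rm tr}(\Sigma M^{2})\leq{\rm tr}(\Sigma C^{2})\leq\lambda_{\max}(\Sigma)\,{\rm tr}(C)^{2}$ gives (2.5); ${\rm tr}((\Sigma M^{2})^{2})={\rm tr}(\Sigma M^{2}\Sigma M^{2})\leq{\rm tr}(\Sigma C^{2}\Sigma C^{2})\leq\lambda_{\max}(\Sigma)^{2}\,{\rm tr}(C)^{4}$ gives (2.6), and the scalar-square version gives (2.7); for (2.11), ${\rm tr}(\Sigma M^{2}\Sigma X^tX M^{2})={\rm tr}(\Sigma M^{2}\Sigma M)-\alpha\,{\rm tr}(\Sigma M^{2}\Sigma M^{2})$ has expectation $O(p^{-3})+\alpha\,O(p^{-4})=O(p^{-3})$, so $p$ times it is $O(1/p^{2})$; and (2.12) is the same with a product of two traces.

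For (2.8)--(2.10) the idea is that the $\alpha=0$ value is exactly what Proposition 1 computes and each $\alpha$-correction picks up an extra factor ${\rm tr}(C)\sim n/p$. Using ${\rm tr}(\Sigma X^tX M^{2})={\rm tr}(\Sigma M)-\alpha\,{\rm tr}(\Sigma M^{2})$ and ${\rm tr}(\Sigma M)={\rm tr}(\Sigma C)-\alpha\,{\rm tr}(\Sigma C M)$ with ${\rm tr}(\Sigma C M)\leq{\rm tr}(\Sigma C^{2})$, equation (2.1) gives $p\,E({\rm tr}(\Sigma X^tX M^{2}))=\frac{np}{p-n-1}+O(1/p)=n+O(1/p)$, which is (2.8). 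For (2.9) and (2.10) I would expand the square, note that the cross term and the $\alpha^{2}$ term have expectation $O(p^{-3})$ and $O(p^{-4})$ by the master estimate, and write ${\rm tr}((\Sigma M)^{2})-{\rm tr}((\Sigma C)^{2})$ (respectively ${\rm tr}(\Sigma M)^{2}-{\rm tr}(\Sigma C)^{2}$) as $\alpha$ times a sum of $\Sigma$-words in $C$ of total $C$-degree $3$, again via the resolvent identity applied one factor at a time, so that equations (2.3) and (2.2) supply the leading $n$ and $n^{2}$. The main obstacle is bookkeeping rather than ideas: one must keep the $\alpha$-resolvent written purely as a function of $X^tX$ so that the commuting inequality $M^{j}\preceq C^{j}$ remains available, and track the expansions in (2.8)--(2.10) carefully enough to confirm that every $\alpha$-correction really does carry an extra factor of ${\rm tr}(C)$ — which is precisely why the $O(p^{-3})$ remainders in Proposition 1 must be as sharp as stated.
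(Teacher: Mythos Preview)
Your approach is correct and essentially the same as the paper's: both reduce every item to Proposition~1 and Lemma~4 by using the domination $(X^tX+\alpha I)^{-1}\preceq (X^tX)^{-1}$ for the pure $O$-bounds and a resolvent/geometric-series expansion to isolate the $\alpha=0$ leading term in (2.8)--(2.10). The only cosmetic differences are that the paper extracts $\Sigma$ via Lemma~2 as a factor ${\rm tr}(\Sigma)$ while you use $\lambda_{\max}(\Sigma)$, and the paper writes $X^tX(X^tX+\alpha I)^{-2}=(X^tX)^{-1}(I+\alpha(X^tX)^{-1})^{-2}$ where you use $C-M=\alpha CM$; these are equivalent bookkeeping choices.
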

\vspace*{.1in}
\begin{proof}
All expressions are evaluated using the geometric series formula, Proposition 1, and Lemmas 2,3, and 4.\vspace*{.1in}
For (5),
\begin{align*}
    E\left(\operatorname{tr}\left(\Sigma \left(X^tX + \alpha I\right)^{-2}\right)\right) 
    & \leq E\left(\operatorname{tr}\left(\Sigma\right)\operatorname{tr}\left(\left(X^tX + \alpha I\right)^{-2}\right)\right) \\
    & \leq E\left(\operatorname{tr}\left(\Sigma\right)\operatorname{tr}\left((X^tX)^{-1}\right)^2\right) = O\left(p^{-2}\right)
\end{align*}
For (6),\vspace*{.1in}
\begin{align*}
   E\left(\operatorname{tr}\left(\left(\Sigma \left(X^tX + \alpha I\right)^{-2}\right)^2\right)\right)
    & \leq E\left(\operatorname{tr}\left(\Sigma\right)^2\operatorname{tr}\left(\left(X^tX + \alpha I\right)^{-1}\right)^4\right) \\
    & \leq E\left(\operatorname{tr}\left(\Sigma\right)^2\operatorname{tr}\left((X^tX)^{-1}\right)^4\right) = O\left(p^{-4}\right)
\end{align*}
For (7),\vspace*{.1in}
\begin{align*}
    E\left(\operatorname{tr}\left(\left(\Sigma \left(X^tX + \alpha I\right)^{-2}\right)\right)^2\right)
    & \leq E\left(\operatorname{tr}\left(\Sigma\right)^2\operatorname{tr}\left(\left(X^tX + \alpha I\right)^{-1}\right)^4\right) \\
    & \leq E\left(\operatorname{tr}\left(\Sigma\right)^2\operatorname{tr}\left((X^tX)^{-1}\right)^4\right) = O\left(p^{-4}\right)
\end{align*}
For (8),\vspace*{.1in}
$$
   p\cdot E\left(\operatorname{tr}\left(\Sigma X^tX\left(X^tX + \alpha I\right)^{-2} \right)\right) 
    $$$$= p\cdot E\left(\operatorname{tr}\left(\Sigma (X^tX)^{-1}\left(I - \alpha (X^tX)^{-1}\left(I + \alpha (X^tX)^{-1}\right)\right)^{2} \right)\right) $$$$= \frac{np}{p - n - 1} + O\left(p^{-1}\right)$$
For (9),\vspace*{.1in}
$$
   \hspace*{-.7in}p^2\cdot E\left(\operatorname{tr}\left(\left(\Sigma X^tX\left(X^tX + \alpha I\right)^{-2}\right)^2\right)\right) 
    $$$$ \hspace*{.59in}= p^2\cdot E\left(\operatorname{tr}\left(\left(\Sigma (X^tX)^{-1}\left(I + \alpha (X^tX)^{-1}\right)^{-2}\right)^2\right)\right)
    $$$$ = p^2\cdot E\left(\operatorname{tr}\left(\left(\Sigma (X^tX)^{-1}\right)^2\right)\right) + O\left(p^{-1}\right)
    $$$$ = n + O\left(p^{-1}\right)
$$
For (10),\vspace*{.1in}
$$
    \hspace*{-.7in}p^2\cdot E\left(\operatorname{tr}\left(\Sigma X^tX\left(X^tX + \alpha I\right)^{-2}\right)^2\right) 
    $$$$ \hspace*{.55in}= p^2\cdot E\left(\operatorname{tr}\left(\Sigma (X^tX)^{-1}\left(I + \alpha (X^tX)^{-1}\right)^{-2}\right)^2\right)
    $$$$ = p^2\cdot E\left(\operatorname{tr}\left(\Sigma (X^tX)^{-1}\right)^2\right) + O\left(p^{-1}\right)
    $$$$ \hspace*{.19in}= n^2  + O\left(p^{-1}\right)
$$
For (11),\vspace*{.1in}
$$
    \hspace*{-1in}p\cdot E\left(\operatorname{tr}\left(\Sigma \left(X^tX + \alpha I\right)^{-2}\Sigma X^tX\left(X^tX + \alpha I\right)^{-2}\right)\right) 
    $$$$ \hspace*{.5in}= p\cdot E\left(\operatorname{tr}\left(\Sigma (X^tX)^{-2}\left(I + \alpha (X^tX)^{-1}\right)^{-2}\Sigma (X^tX)^{-1}\left(I + \alpha (X^tX)^{-1}\right)^{-2}\right)\right) 
$$$$
    \hspace*{-1.85in}\leq p\cdot E\left(\operatorname{tr}\left(\Sigma\right)^2\operatorname{tr}\left((X^tX)^{-1}\right)^3\right)
    $$$$
   \hspace*{-3.1in}= O\left(p^{-2}\right)
$$
For (12),\vspace*{.1in}
$$
    \hspace*{-1in}p\cdot E\left(\operatorname{tr}\left(\Sigma \left(X^tX + \alpha I\right)^{-2}\right)\operatorname{tr}\left(\Sigma X^tX\left(X^tX + \alpha I\right)^{-2}\right)\right) 
    $$$$ \hspace*{-1.1in}\leq p\cdot E\left({\rm tr}(\Sigma)^2{\rm tr}\left((X^tX)^{-1}\right){\rm tr}\left((X^tX)^{-2}\right)\right) \\
    $$$$ \hspace*{-3.05in}= O\left(p^{-2}\right)
$$
\end{proof}

\section{Main Result}

\begin{thm}
Let $X$ be a $m \times n$ matrix of normals with independent rows with covariance $\Sigma$. $b \sim N(0,c^2I)$ and $\epsilon \sim N(0,I)$. $\alpha > 0$, $\sigma > 0$, and $m$ will be assumed to be large.
$$ y = Xb + \sigma\epsilon $$
$$ \hat{b} = (X_{1:p,:}^tX_{1:p,:} + \alpha I)^{-1}X_{1:p,:}^ty_{1:p} $$
\begin{equation} {\rm IM} = E\left(\left(\frac{1}{m - p}\cdot\|X_{p+1:m,:}\hat{b} - y_{p+1:m}\|^2 - \sigma^2\right)^2\right) \end{equation}
\vskip .05in
\noindent The value of $p$ that minimizes the Integrity Metric (IM) is
$$ n^{1/3}(2 + n)^{1/3}\cdot m^{2/3} - \frac{2n^{2/3}(1 + n)}{3(2 + n)^{1/3}}\cdot m^{1/3} + {\rm L.O.T.} $$
and the ratio of training data over testing data goes to $0$ as $m$ becomes large.
\end{thm}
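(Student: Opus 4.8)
The plan is to condition on the training block $(X_{1:p,:},\epsilon_{1:p})$ --- hence on $\hat b$ --- and on $b$, so that the test rows $X_{p+1:m,:}$ and test noise $\epsilon_{p+1:m}$ become independent of everything in sight. With $\delta := \hat b - b$, each test residual is $X_i\hat b - y_i = X_i\delta - \sigma\epsilon_i$, which given $\delta$ is $N(0,W+\sigma^2)$ where $W := \delta^t\Sigma\delta$; the $m-p$ residuals are i.i.d., so $\frac{1}{m-p}\|X_{p+1:m,:}\hat b - y_{p+1:m}\|^2$ equals $\frac{W+\sigma^2}{m-p}$ times a $\chi^2_{m-p}$ variable. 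Reading off the conditional mean $W+\sigma^2$ and conditional variance $\frac{2(W+\sigma^2)^2}{m-p}$ and then taking the outer expectation yields $${\rm IM} = \frac{2\,E[(W+\sigma^2)^2]}{m-p} + E[W^2],$$ so the whole problem collapses to the first two moments of $W$.

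Next I would substitute $\hat b - b = -\alpha M b + \sigma M X^t\epsilon$ with $M := (X^tX+\alpha I)^{-1}$ and $X := X_{1:p,:}$, which follows from $MX^tX - I = -\alpha M$. Because $b\sim N(0,c^2I)$ and $\epsilon\sim N(0,I)$ are independent and mean zero, $E[W] = \alpha^2c^2\,E\,{\rm tr}(\Sigma M^2) + \sigma^2\,E\,{\rm tr}(\Sigma X^tX M^2) = \frac{\sigma^2 n}{p} + O(p^{-2})$ by (5) and (8) of Lemma 5. For $E[W^2]$ I would write $W = u^t\Sigma u + 2u^t\Sigma v + v^t\Sigma v$ with $u = -\alpha Mb$, $v = \sigma MX^t\epsilon$; the two cross terms of odd degree die under $E_{b,\epsilon}$, and the four surviving terms are computed with the Gaussian identities $E[(z^tAz)^2] = ({\rm tr}A)^2 + 2\,{\rm tr}(A^2)$ and $E[(z^tA\eta)^2] = {\rm tr}(AA^t)$ and then bounded or evaluated term-by-term against (6), (7), (9)--(12) of Lemma 5, using that $M$ commutes with $X^tX$. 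Only the $v^t\Sigma v$ piece is of size $p^{-2}$; its coefficient is $\sigma^4[({\rm tr}\,\Sigma X^tXM^2)^2 + 2\,{\rm tr}((\Sigma X^tXM^2)^2)]$, which by (10) and (9) gives $E[W^2] = \frac{\sigma^4 n(n+2)}{p^2} + O(p^{-3})$.

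Plugging these in gives $${\rm IM} = \sigma^4\Big(\tfrac{2}{m-p} + \tfrac{4n}{p(m-p)} + \tfrac{n(n+2)}{p^2}\Big) + {\rm L.O.T.},$$ which I would then minimize in $p$. Expanding $\frac{1}{m-p} = \frac1m + \frac{p}{m^2} + \cdots$, the $p$-dependent terms of largest order are $\frac{2p}{m^2}$ and $\frac{n(n+2)}{p^2}$; setting the derivative of their sum to zero gives $p_0 = n^{1/3}(2+n)^{1/3}m^{2/3}$ --- the leading term --- and shows $p/(m-p)\sim (n(n+2))^{1/3}m^{-1/3}\to 0$. For the correction, write $p = p_0(1+\delta)$ with $\delta = O(m^{-1/3})$ and expand the exact stationarity relation $\frac{2}{(m-p)^2} = \frac{4n(m-2p)}{p^2(m-p)^2} + \frac{2n(n+2)}{p^3}$, i.e.\ $2 - \frac{4n(m-2p)}{p^2} = \frac{2n(n+2)(m-p)^2}{p^3}$, to one further order; using $p_0^3 = n(n+2)m^2$ the $O(m^{-1/3})$ terms force $6\delta = -4\big[(n(n+2))^{1/3} - n(n(n+2))^{-2/3}\big]m^{-1/3}$, hence $\delta = -\tfrac23 n(n+1)(n(n+2))^{-2/3}m^{-1/3}$, and $p_0\delta = -\tfrac{2n^{2/3}(1+n)}{3(2+n)^{1/3}}m^{1/3}$, as claimed.

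The main obstacle I expect is the bookkeeping inside $E[W^2]$: tracking which of the expanded terms survives, matching each survivor to the correct one of the nearly-identical-looking estimates (9)--(12), and --- crucially --- remembering that it is $E[W^2]$, not ${\rm Var}(W)$, that enters ${\rm IM}$, so the relevant coefficient is $n(n+2)$ rather than $n^2$. A secondary point needing care is the order accounting in the minimization: one must check that the $\frac{4n}{p(m-p)}$ term really does feed the $m^{1/3}$ coefficient (it does, at the same order as the $-2mp$ cross term of $(m-p)^2$), whereas the $O(p^{-2})$ error in $E[W]$ and the $O(p^{-3})$ error in $E[W^2]$ do not, so that the two displayed terms of $p$ are unambiguously determined.
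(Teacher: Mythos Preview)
Your argument is correct and reaches the same two-term answer, but by a genuinely cleaner route than the paper. The paper works directly with the test block: it writes the residual vector as $Ab+\sigma B\epsilon_{1:p}-\sigma\epsilon_{p+1:m}$, squares out $\big(\tfrac{1}{m-p}\|\cdot\|^2-\sigma^2\big)^2$ into nine summands (expression~(14)), and then takes Gaussian moments over $X_{p+1:m,:}$ to convert each into traces of $\Sigma GG^t$, $\Sigma HH^t$, and their products---eight separate expectations---before invoking Lemma~5 and handing the resulting rational function to Mathematica to extract the asymptotics. Your conditioning step replaces all of this: recognizing that, given $\delta=\hat b-b$, the normalized test error is $\tfrac{W+\sigma^2}{m-p}\chi^2_{m-p}$ collapses the IM to $\tfrac{2E[(W+\sigma^2)^2]}{m-p}+E[W^2]$ in one line, so only the first two moments of the scalar $W=\delta^t\Sigma\delta$ are needed. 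You then land on exactly the same Lemma~5 estimates (5)--(12) that the paper uses, and your three-term leading expression for the IM reproduces, after differentiation, the same polynomial stationarity condition $n(n+2)m^2-2n(n+1)mp+n(n-2)p^2-p^3=0$ (up to lower-order terms) that the paper obtains via Mathematica. What your route buys is transparency and a by-hand derivation of both asymptotic coefficients; what the paper's buys is an explicit closed-form rational function in $m$ and $p$ valid before truncation. Your order-checks---that the $O(p^{-3})$ error in $E[W^2]$ and the $O(p^{-2})$ error in $E[W]$ do not contaminate the $m^{1/3}$ coefficient, while the $\tfrac{4n}{p(m-p)}$ term does---are correct and are precisely what justifies stopping at two terms.
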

\vspace*{-.3in}
\begin{proof}
Consider
$$ \left(\frac{1}{m - p}\cdot\|X_{p+1:m,:}\hat{b} - y_{p+1:m}\|^2 - \sigma^2\right)^2 $$$$  \hspace*{-.5in} = \Bigg( \frac{1}{m-p}\cdot\Big\|X_{p+1:m,:}(X_{1:p,:}^tX_{1:p,:} +\alpha I)^{-1}X_{1:p,:}^t(X_{1:p,:}b+\sigma\epsilon_{1:p}) $$$$  \hspace*{-.1in} - X_{p+1:m,:}b - \sigma\epsilon_{p+1:m}\Big\|^2 - \sigma^2 \Bigg)^2 $$

Let $$A = X_{p+1:m,:}(X_{1:p,:}^tX_{1:p,:} + \alpha I)^{-1}X_{1:p,:}^tX_{1:p,:} - X_{p+1:m,:} $$$$= -\alpha X_{p+1:m,:}(X_{1:p,:}^tX_{1:p,:} + \alpha I)^{-1}$$ and $$ B =  X_{p+1:m,:}(X_{1:p,:}^tX_{1:p,:} + \alpha I)^{-1}X_{1:p,:}^t, $$ then this is
$$ \left( \frac{1}{m-p}\cdot\| Ab + \sigma B\epsilon_{1:p} - \sigma\epsilon_{p+1:m}\|^2 - \sigma^2 \right)^2 $$
which in E.V. is the Integrity Metric, and this becomes
$$ = \Bigg( \frac{1}{m-p}\Bigg( (\|Ab\|^2  - (m-p)\sigma^2) + \sigma^2\|B\epsilon_{1:p}\|^2 + \sigma^2\|\epsilon_{p+1:m}\|^2 $$$$ + 2\sigma b^tA^tB\epsilon_{1:p} - 2\sigma b^tA^t\epsilon_{p+1:m} - 2\sigma^2\epsilon_{1:p}^tB^t\epsilon_{p+1:m}\Bigg) \Bigg)^2 $$
$$ = \frac{1}{(m-p)^2}\Bigg( \left( (\|Ab\|^2  - (m-p)\sigma^2) + \sigma^2\|B\epsilon_{1:p}\|^2 + \sigma^2\|\epsilon_{p+1:m}\|^2\right)^2 $$$$+ 4\sigma^2 \left\|b^tA^tB\epsilon_{1:p}\right\|^2 + 4\sigma^2\left\|b^tA^t\epsilon_{p+1:m}\right\|^2 + 4\sigma^4(\epsilon_{1:p}^tB^t\epsilon_{p+1:m})^2\Bigg) $$
$$
 = \frac{1}{(m-p)^2}\Bigg(  (\|Ab\|^2  - (m-p)\sigma^2)^2 + \sigma^4\|B\epsilon_{1:p}\|^4 + \sigma^4\|\epsilon_{p+1:m}\|^4$$$$+2\sigma^2(\|Ab\|^2  - (m-p)\sigma^2)\|B\epsilon_{1:p}\|^2 + 2\sigma^2(\|Ab\|^2  - (m-p)\sigma^2)\|\epsilon_{p+1:m}\|^2 $$$$ + 2\sigma^4\|B\epsilon_{1:p}\|^2\|\epsilon_{p+1:m}\|^2   + 4\sigma^2 \left\|b^tA^tB\epsilon_{1:p}\right\|^2 + 4\sigma^2\left\|b^tA^t\epsilon_{p+1:m}\right\|^2 + 4\sigma^4(\epsilon_{1:p}^tB^t\epsilon_{p+1:m})^2\Bigg)
$$
which has expected value in common with
\begin{multline}
 \hspace*{-.2in} \frac{1}{(m-p)^2}\Bigg(  c^4\cdot{\rm tr}(A^tA)^2 + 2c^4\cdot{\rm tr}((A^tA)^2) +\sigma^4\cdot{\rm tr}(B^tB)^2 + 2\sigma^4\cdot{\rm tr}((B^tB)^2) + 2\sigma^4(m-p) \\ 
 \hspace*{.6in} + 2c^2\sigma^2\cdot{\rm tr}(A^tA){\rm tr}(B^tB)+4c^2\sigma^2\cdot{\rm tr}(B^tAA^tB) + 4c^2\sigma^2\cdot{\rm tr}(A^tA) + 4\sigma^4\cdot{\rm tr}(B^tB)\Bigg) \quad\end{multline}
\vskip .1 in
Replace $A$ and $B$ with $$A = -\alpha X_{p+1:m,:}G$$$$ B =  p^{1/2}X_{p+1:m,:}H, $$ \vskip .1 in where $G = \left(X_{1:p,:}^tX_{1:p,:} + \alpha I\right)^{-1}$ and $H = p^{1/2}\left(X_{1:p,:}^tX_{1:p,:} + \alpha I\right)^{-1}X_{1:p,:}^t$.
\vskip .05in
The expressions

$$ E({\rm tr}(A^tA)) = \alpha^2E\left({\rm tr}(G^tX_{p+1:m,:}^tX_{p+1:m,:}G)\right) $$
$$ E({\rm tr}(B^tB)) = p^{-1} E\left({\rm tr}(H^tX_{p+1:m,:}^tX_{p+1:m,:}H)\right) $$
$$ E({\rm tr}(A^tA)^{2}) = \alpha^4E\left({\rm tr}(G^tX_{p+1:m,:}^tX_{p+1:m,:}G)^2\right) $$
$$ E({\rm tr}(B^tB)^2) = p^{-2}\cdot E\left({\rm tr}(H^tX_{p+1:m,:}^tX_{p+1:m,:}H)^2\right) $$
$$ E({\rm tr}((A^tA)^2)) = \alpha^4E\left({\rm tr}((G^tX_{p+1:m,:}^tX_{p+1:m,:}G)^2)\right) $$
$$ E({\rm tr}((B^tB)^2)) = p^{-2}\cdot E\left({\rm tr}((H^tX_{p+1:m,:}^tX_{p+1:m,:}H)^2)\right) $$
$$ E({\rm tr}(B^tAA^tB)) = p^{-1}\alpha^2E\left({\rm tr}(H^tX_{p+1:m,:}^tX_{p+1:m,:}GG^tX_{p+1:m,:}^tX_{p+1:m,:}H)\right) $$
$$ E({\rm tr}(A^tA){\rm tr}(B^tB)) = p^{-1}\alpha^2E\left({\rm tr}(G^tX_{p+1:m,:}^tX_{p+1:m,:}G){\rm tr}(H^tX_{p+1:m,:}^tX_{p+1:m,:}H)\right) $$
\vskip .15 in
\noindent become by well-known properties of Gaussians,

\begin{equation*} E({\rm tr}(A^tA)) = \alpha^2E\left((m - p){\rm tr}(\Sigma GG^t)\right) \end{equation*}
\begin{equation*} E({\rm tr}(B^tB)) = p^{-1} E\left((m - p){\rm tr}(\Sigma HH^t)\right) \end{equation*}
\begin{equation*} E({\rm tr}(A^tA)^{2}) = \alpha^4E\left((m - p)^2{\rm tr}(\Sigma GG^t)^2 + 2(m-p){\rm tr}((\Sigma GG^t)^2)\right) \end{equation*}
\begin{equation*} E({\rm tr}(B^tB)^2) = p^{-2}E\left((m - p)^2{\rm tr}(\Sigma HH^t)^2 + 2(m-p){\rm tr}((\Sigma HH^t)^2)\right) \end{equation*}
\begin{equation*} E({\rm tr}((A^tA)^2)) = \alpha^4E\left((m - p){\rm tr}(\Sigma GG^t)^2 + (m - p)(m - p + 1){\rm tr}((\Sigma GG^t)^2)\right) \end{equation*}
\begin{equation*} E({\rm tr}((B^tB)^2)) = p^{-2}E\left((m - p){\rm tr}(\Sigma HH^t)^2 + (m - p)(m - p + 1){\rm tr}((\Sigma HH^t)^2)\right) \end{equation*}
$$\hspace*{-3.75in} E({\rm tr}(B^tAA^tB)) $$$$= p^{-1} \alpha^2E\left((m - p){\rm tr}(\Sigma GG^t){\rm tr}(\Sigma HH^t) + (m - p)(m - p + 1){\rm tr}(\Sigma GG^t\Sigma HH^t)\right) $$
$$\hspace*{-3.75in} E({\rm tr}(A^tA){\rm tr}(B^tB))$$$$ = p^{-1} \alpha^2E\left((m - p)^2{\rm tr}(\Sigma GG^t){\rm tr}(\Sigma HH^t) + 2(m-p){\rm tr}(\Sigma GG^t\Sigma HH^t)\right) $$

Now expression (13) becomes expression (14), and the above eight expected values can be evaluated using Lemma 5 and plugged into (14), so (13) is a rational function in $m$ and $p$ (plus some big-$O$ terms) which we would like to minimize using calculus. We find it by using Mathematica and plugging in, and we have a polynomial in $m$ and $p$ for the Integrity Metric, which we call call it $f(m,p)$. Let $g(m,p)$ be the polynomial obtained by finding $\frac{\partial f}{\partial p}$.

\begin{multline*}
-(m-p)^2p^5 g(m, p) = 4 m^2 n p^2 \sigma^4 + 2 m^2 n^2 p^2 \sigma^4 
- 4 m n p^3 \sigma^4 - 4 m n^2 p^3 \sigma^4 \\
- 4 n p^4 \sigma^4 + 2 n^2 p^4 \sigma^4 
- 2 p^5 \sigma^4 + \text{L.O.T.}
\end{multline*}

We want to set this expression to zero to find $p$ as a function of $m$ in the limit as $m$ becomes large. All of the L.O.T. are of strictly lower order if any terms of $p$ as a function of $m$ are of order at least $m^{1/3}$. We can solve it using Mathematica to find $p^*(m)$ and take the limit as $m\rightarrow\infty$ of $p^*(m)/m^{2/3}$, which is $n^{1/3}(2 + n)^{1/3}$, and then find the limit as $m\rightarrow\infty$ of $\left(p^*(m) - n^{1/3}(2 + n)^{1/3}\right)/m^{1/3}$ to get $- \frac{2n^{2/3}(1 + n)}{3(2 + n)^{1/3}}$, the desired result.

\end{proof}

\section{Computational Evidence}

For small $m$ and $n$, it is possible to find $p^*(m)$ computationally and compare it to the first two terms of the formula given in Theorem 6. For each panel in Figures 1 and 2, $n$, $c$, $\sigma$, and $\alpha$ are chosen (see the caption), $\Sigma$ is one positive definite matrix generated randomly, and the following is done $10^5$ times: $b$, $\epsilon$, and $X$ are sampled randomly and the Integrity Metric is approximated via the Law of Large Numbers for every value of $p$ ranging from $n$ to $m$. The minimizing $p$ is chosen, and it is taken to be $p^*(m)$, plotted in red. Note that convergence is slow since taking an $\arg\min$ of a random quantity is inaccurate.

\section{Discussion}

We have shown that the author's previous result in \cite{Dubbs2024}, that training set sizes are $O(m^{2/3})$ where $m$ is the number of data points, is resilient to the addition of ridge regularization, in the first result of its kind. It seems very likely that this is a general rule for linear models, but tree-based models and deep learning models may work differently. Future research should establish whether this phenomenon holds for these nonlinear models.

\begin{figure}
    \centering
    \begin{minipage}{0.48\textwidth}
        \centering
        \includegraphics[width=\linewidth]{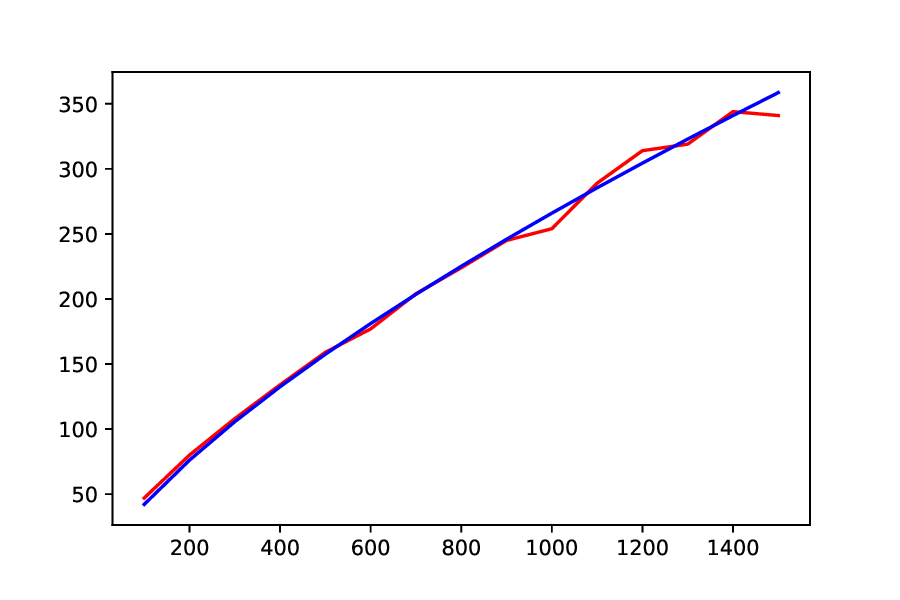}
        \subcaption{Panel 1.}
    \end{minipage}
    \hfill
    \begin{minipage}{0.48\textwidth}
        \centering
        \includegraphics[width=\linewidth]{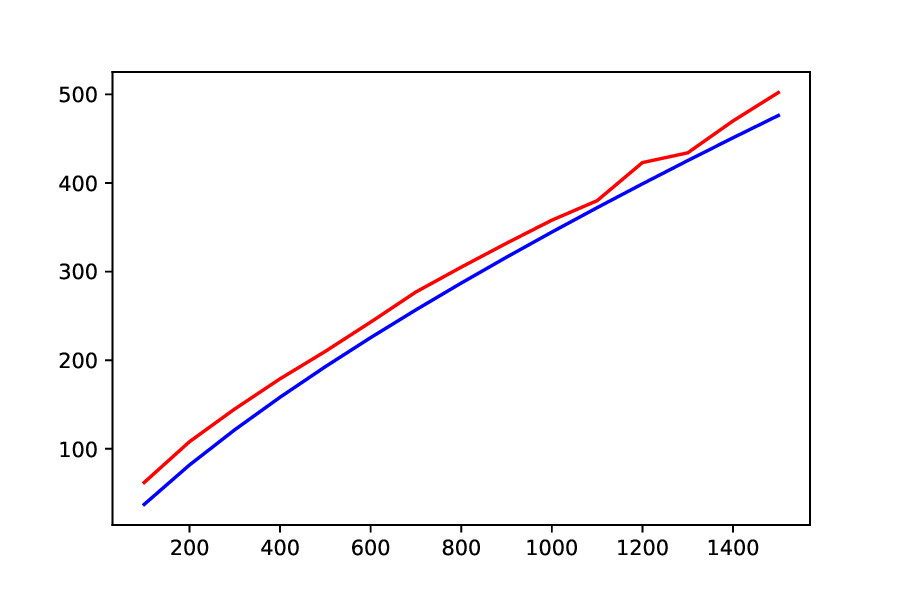}
        \subcaption{Panel 2.}
    \end{minipage}
    \vspace{0.5cm}
    \begin{minipage}{0.48\textwidth}
        \centering
        \includegraphics[width=\linewidth]{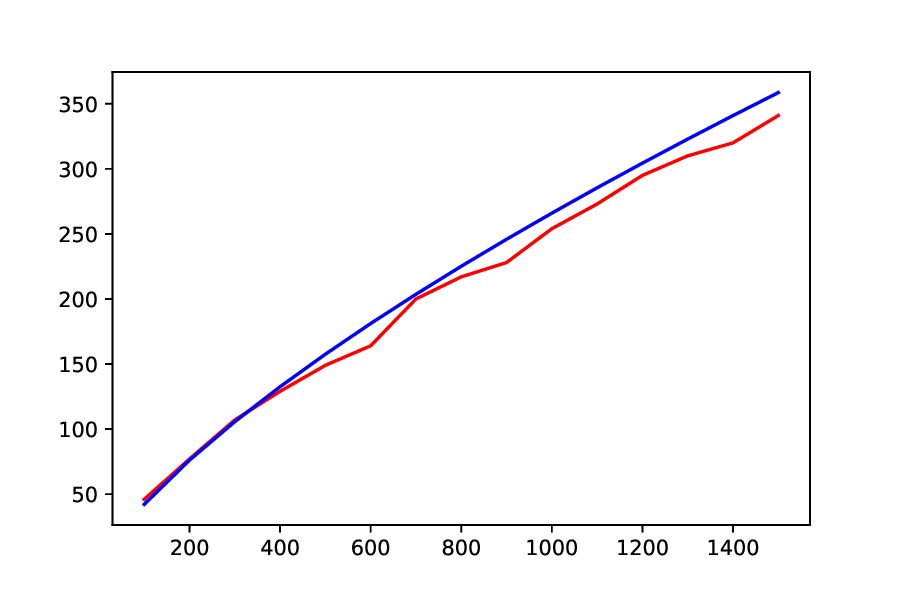}
        \subcaption{Panel 3.}
    \end{minipage}
    \hfill
    \begin{minipage}{0.48\textwidth}
        \centering
        \includegraphics[width=\linewidth]{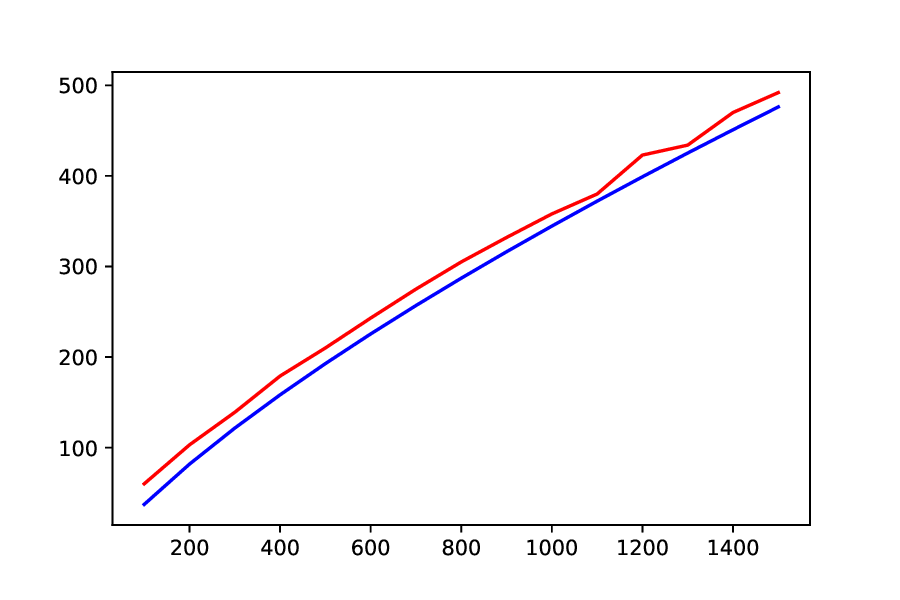}
        \subcaption{Panel 4.}
    \end{minipage}
    \caption{Odd numbered panels have $n = 5$, even numbered panels have $n = 10$, all have $c = 0.1$. The first two panels have $\sigma = 0.1$, the second two have $\sigma = 0.2$. All have $\alpha = 2$. Analytic computation in blue, numerical in red.}
    \label{fig:fourpanel}
\end{figure}

\begin{figure}
    \centering
    \begin{minipage}{0.48\textwidth}
        \centering
        \includegraphics[width=\linewidth]{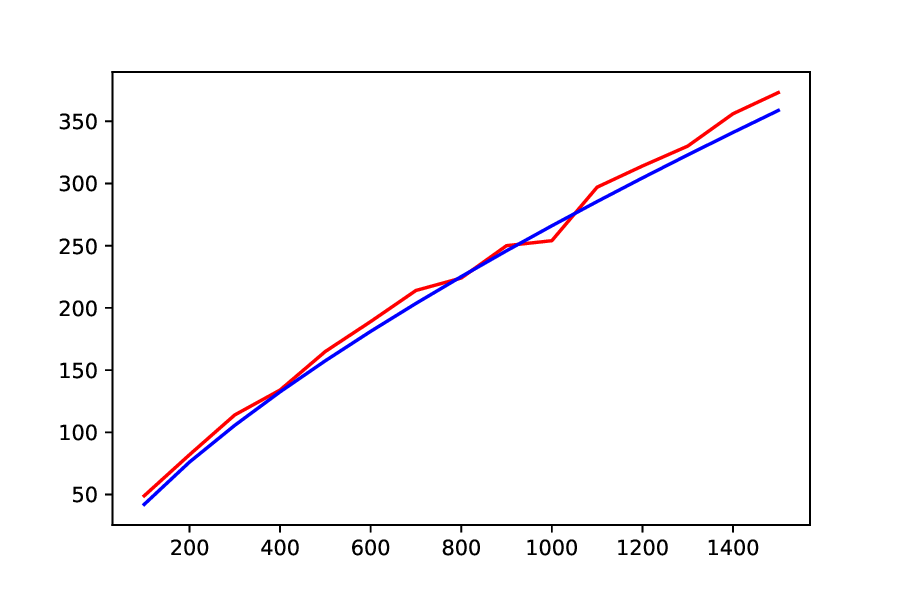}
        \subcaption{Panel 5.}
    \end{minipage}
    \hfill
    \begin{minipage}{0.48\textwidth}
        \centering
        \includegraphics[width=\linewidth]{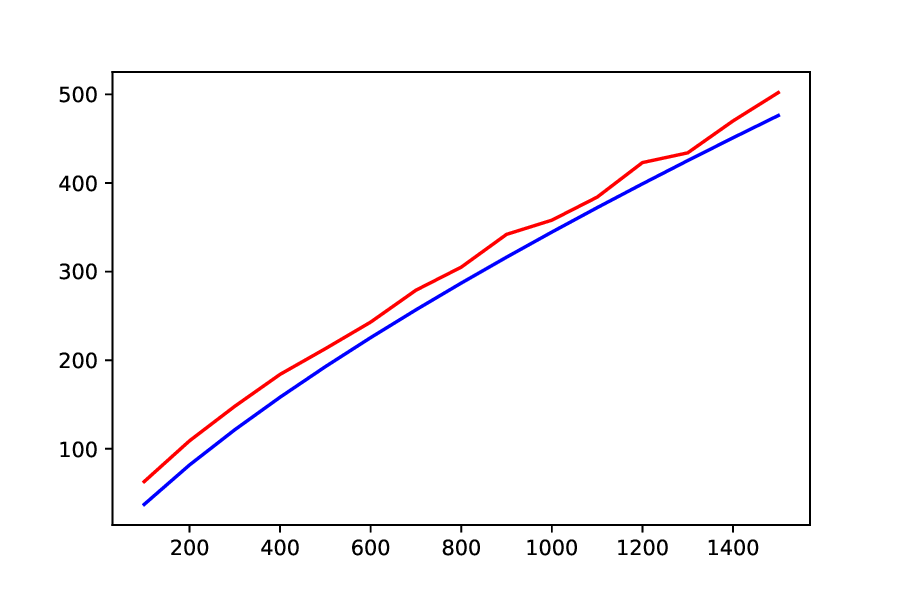}
        \subcaption{Panel 6.}
    \end{minipage}
    \vspace{0.5cm}
    \begin{minipage}{0.48\textwidth}
        \centering
        \includegraphics[width=\linewidth]{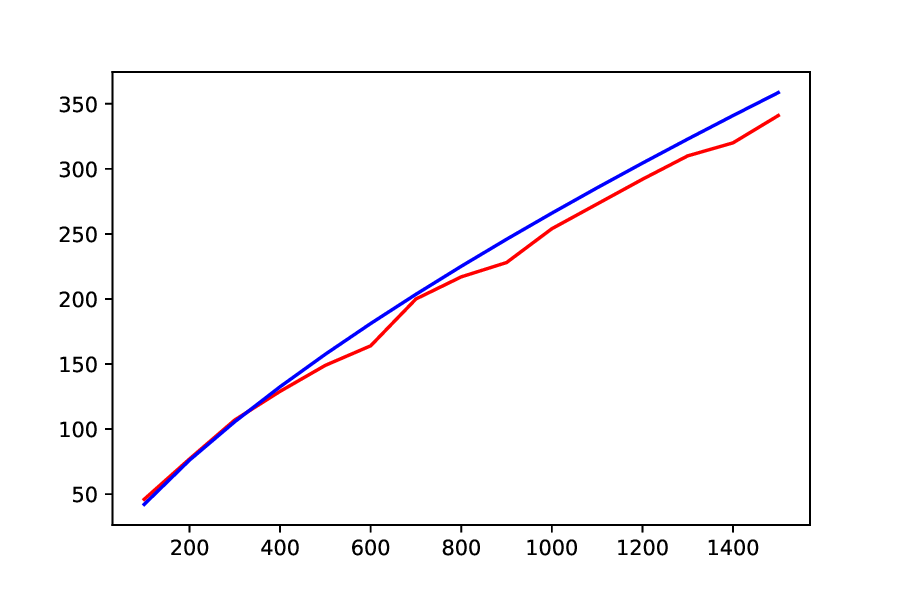}
        \subcaption{Panel 7.}
    \end{minipage}
    \hfill
    \begin{minipage}{0.48\textwidth}
        \centering
        \includegraphics[width=\linewidth]{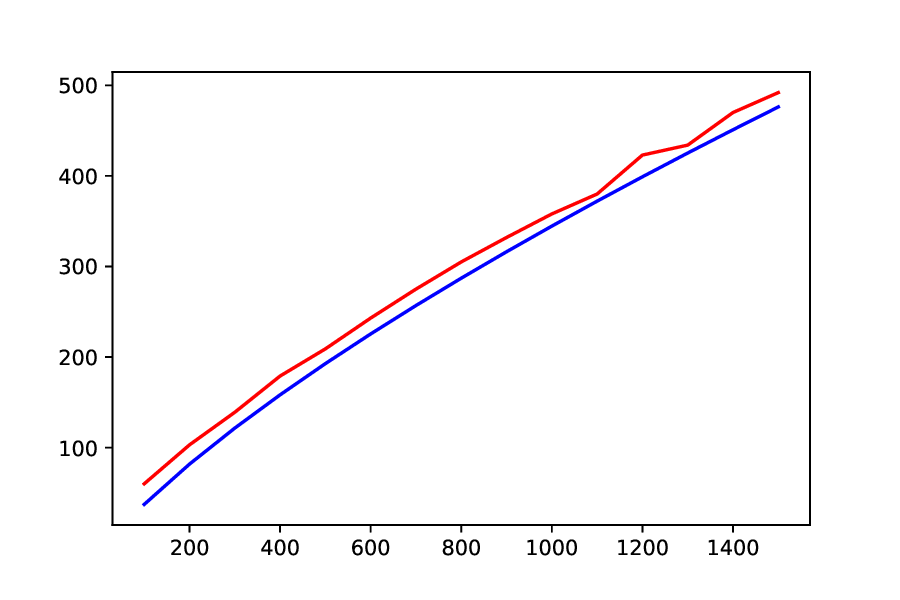}
        \subcaption{Panel 8.}
    \end{minipage}
    \caption{Odd numbered panels have $n = 5$, even numbered panels have $n = 10$, all have $c = 0.1$. The first two panels have $\sigma = 0.1$, the second two have $\sigma = 0.2$. All have $\alpha = 4$.  Analytic computation in blue, numerical in red.}
    \label{fig:fourpanel}
\end{figure}

\section*{Declarations}

\subsection*{Funding}

Not applicable.

\subsection*{Conflicts of Interest}

None.

\subsection*{Ethics Approval}

Not applicable.

\subsection*{Consent to Participate}

Not applicable.

\subsection*{Consent for Publication}

Not applicable.

\subsection*{Availability of Data and Materials}

Not applicable.

\subsection*{Code Availability}

Code is available upon request.

\subsection*{Authors' Contributions}

This article is solely the work of its one author.
\section*{Declarations}

\subsection*{Funding}

Not applicable.

\subsection*{Conflicts of Interest}

None.

\subsection*{Ethics Approval}

Not applicable.

\subsection*{Consent to Participate}

Not applicable.

\subsection*{Consent for Publication}

Not applicable.

\subsection*{Availability of Data and Materials}

Not applicable.

\subsection*{Code Availability}

Code is available upon request.

\subsection*{Authors' Contributions}

This article is solely the work of its one author.

\end{document}